\newtheorem{definition}{Definition}
\newtheorem{proposition}{Proposition}
\newcommand{\eps}{\varepsilon}
\DeclareMathOperator*{\trace}{trace}
\title{Causal Interfaces}
\author{David A. Eubanks}
\begin{document}
\maketitle
\begin{abstract}
The interaction of two binary variables, assumed to be empirical observations, has three degrees of freedom when expressed as a matrix of frequencies. Usually, the size of causal influence of one variable on the other is calculated as a single value, as increase in recovery rate for a medical treatment, for example. We examine what is lost in this simplification, and propose using two interface constants to represent positive and negative implications separately. Given certain assumptions about non-causal outcomes, the set of resulting epistemologies is a continuum. We derive a variety of particular measures and contrast them with the one-dimensional index.
\end{abstract}

\section{Introduction}
Everyday life depends on assumptions about cause and effect. Flipping a switch causes a light to come on, and  turning a key causes an automobile's engine to start. A modern account of causality and its relationship to mathematical probability and statistics can be found in the work of Judea Pearl, including \cite{pearl2009causal} and \cite{pearl2000causality}, and his development of structural analysis in \cite{pearl2009causality} provides a logical framework for causal reasoning. The aim of the present work is to focus on the simplest-possible causal connections: the summary of an experimental interaction between two binary variables. Following Pearl's distinction between purely observational data and experimental data, we will generally assume that a causal variable $A$ was manipulated randomly, with $do(A=1)$ or $do(A=0)$, in order to see what effect it has $B$. If we are directly causing $A=0$ and $A=1$ using a random assignment, this allows us to rule out (with probability approaching one) a third confounding variable $C$ from simultaneously controlling $A$ and $B$ to give the false appearance of causality.

Assessing the impact of $A$ on $B$ can be done in many ways, including probability-raising and odds ratios. An attempt to catalog and compare these can be found in \cite{fitelson2011probabilistic}. Probability-raising is intuitive: the magnitude calculated from is the probability $\Pr{(B|A)}$ versus either $\Pr{(B)}$ or $\Pr{(B|A')}$ \cite{sep-causation-probabilistic}, where the prime means complement or logical negation. As an example: can a neighborhood baseball game cause a window of a house bordering the outfield to break? Were we to conduct an experiment to determine the answer, we would randomly assign baseball games to fields in various neighborhoods, and measure the breakage of windows relative to a control group of houses that bordered baseball-less fields. If we saw an increase in the percentage of fractured panes in the with-baseball treatment group, we might conclude that this additional misfortune was caused by the games nearby. An ``increase in probability" as a marker for causality is reasonable, but not entirely satisfactory, because it relies on a background rate of window-breakage. In neighborhoods without glass windows, it would be undetectable, for example, even though the causal power of baseballs over window panes would presumably be unaltered. This example helps distinguish between the statistical context that an experiment is performed in and the presumed causal effect that we are interested in. The former is likely to change unless we are dealing with fundamental properties of the universe. Electrons are interchangeable as units of experimentation, but houses and their windows are all different.

The approach we take here is that the assertion of causality based on observations is a conversion from observational data, which is statistical in nature, into language-based assertions, which are logical in nature. The intention is to inform discrete decisions like ``take the medicine or not'' rather than ``what percentage of the pill should I chop off and swallow?''. We therefore take causal inferences to be specific and universal, but subject to masking by other causes. We might say that ``light switches always turn lights on'', and then account for the exceptions as part of the particular context. Burned out bulbs, power outages, malfunctioning switches, bad wiring, and so on, are cases of the causality being masked by context, not violating the causality itself. In the present work we will use the word ``confusion'' to denote contextual interference with causes, which happens with some estimated probability. The estimate will only be valid for the context in which it was gathered.

If causality is treated as a strict logical implication $A \Rightarrow B$, then exceptions have to be accounted for in other factors. If $A$ increases the probability of $B$, this is an indication of a cause at work, but the probability increase is not the cause itself. In the example, the baseball game probably does increase the probability of a broken window, but it is the baseball itself that does the breaking. In this way we parse the situation as the game increasing the probability of a causal event, but if it does occur the implication from [baseball hits window] to [window breaks] is certain. The probability increase depends on context, but the causal implication does not. The former depends on a list of sufficient conditions (for windows breaking) that were present in the experimental samples, whereas the latter is itself a sufficient condition. In this case, other sufficient conditions would include sonic booms, giant hail, a rock thrown by a lawn mower, an accident inside the house, a large bird flying into the window, and large explosions nearby. Some of these will be rare, and it is unlikely that our experiment will sample them all, and even in the unfortunate circumstance that we did sample them all, they are not likely to recur with stable frequencies as the world evolves. We can think of all these sufficient causes as binary random variables that are constructed by the unfolding of time. They may be stochastic with predictable regularity or they may better thought of as the result of a very complex computing machine that occasionally produces a mark on a tape. The former context can be treated well with parametric statistics. For example, mean and variation can help understand crop yield under controlled cultivation. On the other hand, the hallmark of intelligence (or just computation) is the chaining together of unlikely events, so that a given sequence of cause-and-effect may only happen once. As an example, the internal state of a given desktop computer be unique in the past and future history of the universe, and it is not best understood as random but as the consequence of a large number of deterministic logical operations, which has severe logical limits (the Halting Problem and its kin) \cite{turing1936computable}.

If $A$ is sufficient for $B$, then evidence for other sufficient conditions comes to use through $B \Rightarrow A$. In the earlier example, this converse is [window breaks] implies [baseball game], which we use in the contrapositive form [no baseball game] implies [window doesn't break] because the variable under experimental control is the presence of the game. We will call this implication ``negative causality''. This term is loaded with philosophical connotations that we will not delve into \cite{birnbacher2012negative}, but instead just use it as a convenient name for the converse of ``positive causality''. If we only observe windows breaking in the presence of baseball games, then the contexts of those experiments may exclude other sufficient causes. Negative causality is of interest because lack of it can mask evidence for positive causality. For example, if the baseball games are played on the coast, and all the windows are hurricane-proof, this fact will mask the causality of baseballs breaking normal windows. The associated logic would be [baseball hits window] $\Rightarrow$ [window breaks] OR [hurricane-proof windows] OR ... . Low negative causality points to other sufficient causes, whereas a value of unity means that the context of the experiment did not show evidence of any other causes of $B$, leaving us to conclude that (in context) $A \Leftrightarrow B.$

Binary relationships of the type under consideration here can be considered categorical, and a comprehensive introduction to the analysis of categorical variables can be found in \cite{agresti2007introduction}; a number of ways of calculating the influence of $A$ on $B$ can be found there. The $A \times B$ interaction tables are also put to use in the classification problem of $A$ predicting $B$, which differentiates true positive and negative from false positive and negative cases for (often) purely observational data \cite{kolo2011binary}. Although we have to give up causality, the techniques described in the following sections can also be used for observational data. See in particular Section \ref{ss:C}.

For simplicity, we will assume that the number of observations is large, and work directly with frequencies instead of counts. We organize these into a matrix
\[
P
=
\begin{bmatrix}
  p_{00} & p_{01} \\
  p_{10} & p_{11}
 \end{bmatrix}
\]
which sums to one, and where the row index is the value of $A$ and the column index is the value of $B$. We will usually assume that we have experimental control over $A$, and that for each observation that provides information for $P$, the observation of $B$ follows the observation of $A$ in time in some reasonable way. It will be convenient to refer to column and row sums of the $P$ matrix in a compact way, and we will use a slightly modified form of the notation in \cite{agresti2007introduction}, where the sum of the two $i$ row elements will be denoted $p_{i*}$ and the sum over column $j$ as $p_{*j}$.

As an example, we imagine a drug test with a trial group $A=1$ and a randomly-selected control group $A=0$, testing to see if patients recover within a certain period ($B=1$) or fail to recover in that period ($B=0$). Assume that the experiment yields the frequencies
\[
P_1 =
\begin{bmatrix}
  .23 & .25 \\
.20  &  .32
 \end{bmatrix}.
\]

What should we conclude about the effect of the treatment? The matrix has four elements, but because they sum to one, there are three degrees of freedom. Therefore, a single index of the effect of the drug therefore cannot tell us everything about the drug's effect and the content in which it happens. Nevertheless, we might begin with the difference between the success rate with treatment and the success rate without, which we will denote as $\hat{\eps}$ throughout, as a point of reference for other measures we will derive. We have
\begin{align}
\hat{\eps} &= \frac{p_{11}} {p_{10} + p_{11}} - \frac{p_{01}}{p_{00} + p_{01}}  \\
&= \frac{p_{11}} {p_{1*}} - \frac{p_{01}}{p_{0*}} \label{eq:epsilon0} \\
&= \frac{p_{00}p_{11}-p_{01}p_{10}}{(p_{00}+p_{01})(p_{10}+p_{11})} \\
&= \frac{p_{00}p_{11}-p_{01}p_{10}}{p_{0*}p_{1*}} \label{eq:epsilon}
\end{align}
It is easy to show that $\hat{\eps} \epsilon [-1,1]$. It is easy to show that the numerator is the determinant of the matrix, which is also the covariance of $A,B$. The denominator is the variance of $A$.  We have
\begin{equation}
\hat{\eps} = \frac{Cov(A,B)}{Var(A)}.
\end{equation}
A couple of other connections are worth mentioning. If we view $B$ as a predictor of $A$ (zero predicts zero and one predicts one), then the area under the true positive versus false positive curve (the so-called ROC curve) is $1/2+\hat{\eps}/2$. Additionally, $\hat{\eps}$ is the slope of the regression line for $B$ as a predictor of $A$.

In the example, $\hat{\eps} \approx .10$ meaning that $10\%$ more subjects recovered with treatment than without. It can be the case that the effect is negative, which we can see by reversing the columns of the matrix to get $\begin{bmatrix}  .23 & .25 \\ .20  &  .32  \end{bmatrix}$, with associated $\hat{\eps} \approx -.10$. This would be the case if the treatment did actual harm by reducing the chance of recovery. Rather than using negatives to quantify effects, we will choose to think of this as a semantic problem and avoid it in the following way. Without loss of generality, we can change a negative $\hat{\eps}$ into a positive one by inverting the output designations, swapping the meaning of $B=0$ and $B=1$. As an example, a vaccine's effect is intended to be negative in the logical sense: it prevents a condition rather than causing one. In this case, we could set $A=1$ for vaccinated subjects, $A=0$ for the others, and then use $B=0$ to categorize subjects who contracted the disease the vaccine is intended to prevent, with $B=1$ the remainder (those who did not contract it). Assuming that the vaccine actually does have the desired effect, we will have $\hat{\eps}>0$. Mathematically, this is a permutation matrix on the columns (i.e., post-multiplied). We will consider any such transformation already accomplished, so that we can assume the the interaction matrix $P$ always has $p_{00}p_{11}-p_{01}p_{10} \geq 0$. When we want to distinguish these cases of ``backwards'' variables from the ``forwards'' ones, we will call them \emph{anti-causal}.

Next we examine a weakness of $\hat{\eps}$ as an measure of influence of $A$ over $B$. The matrix
\[
P_2 =
\begin{bmatrix}
  .05 & .45 \\
  0  &  .50
 \end{bmatrix}
\]
has $\hat{\eps} \approx .10,$ about the same as the previous example. The two situations represented by $P_1$ and $P_2$ are qualitatively different, however. If we were just looking at the bottom rows of these matrices (the cases where the treatment was applied), we would consider $P_2$ to be superior to $P_1$, because 100\% of the treatments seemed to be successful in the former. This detail is lost when we compress the effectiveness of the treatment into a single index. We might try to create a complementary second index that measures the effect of \emph{non-treatment}.  A familiar example will underscore this point.

Residential light switches are designed to be simple interfaces: one position (``up'') turns the light on, and the other (``down'') turns it off. If we built an interaction table like $P_1$ and $P_2$ based on observations of switch and light, it might look like
\[
P_3 =
\begin{bmatrix}
  .45 & 0 \\
  .02  &  .53
 \end{bmatrix},
\]
where $p_{10}$ represents the frequency that the switch is up, but the light remains off. This could be because of a burned-out bulb or power outage. A simple model of the light's state as influenced by the switch, bulb, and power state can be written in Boolean algebra as
\[
L = S B P.
\]
Negating this gives us the equivalent statement about $L'$,
\[
L' = S' + B' +  P'.
\]
The off state of $L$ is controlled by an OR relationship to the down position of $S$, whereas the on state is complicated by influences beyond the control of the switch. In this sense, light switches are better at turning lights off than turning them on in this model because $S' \Rightarrow L'$ but $S \nRightarrow L$. Other asymmetrical causes can easily be found. Consuming poison causes one to die, but abstaining from it doesn't cause one to live (otherwise ambulance crews would carry around vials of the stuff so they could save accident victims by not giving them poison). On the other hand, there are causal relationships that work both directions, and they are valuable for understanding and creating the world. Communications systems are built on if-and-only-if relationships between signal and intent, like the tap of a telegraph key and the resulting ``dit'' on the other end of the wire.

In order to capture asymmetrical causality we will look for positive effects and negative effects as separate measures. This amounts to two questions: ``How much control does the switch have over turning the light off?'' and ``How much control does the switch have over turning the light on?'' We will refer to these two parameters together as an \emph{interface}, with two \emph{interface coefficients} $(\eps_0,\eps_1)$ that measure these respective types of control.

\section{Interfaces and Indexes}

An elevator is designed to be an interface that allows us to get to some desired floor in a building. For the sake of illustration, let our input $A=1$ be the state of our pressing the button numbered with our floor, and $B=1$ the state of exiting the elevator car onto that floor within one minute after entering the elevator. If we are alone in the building, the elevator acts like a perfect interface; we will get to out floor (quickly) if and only if we press the button. A matrix of experimental frequencies would be diagonal. In this case, we have full control, and it is natural to expect that $(\eps_0,\eps_1) = (1,1).$

If the building is occupied with other travelers, there may be occasions when our ride is slowed appreciably, so that we experience instances of $B=0$ (not arriving within one minute) even though we pressed the button ($do(A=1)$). On the other hand, if we don't press the button ($do(A=0)$), there is still a chance that the car will stop on our floor because either other passengers press our button, or someone on that floor signals a stop. After repeated trips up and down, careful observation might reveal an interaction matrix like $\begin{bmatrix}.3 & .2 \\ .05 & .45 \end{bmatrix}.$ We would expect in this case that the positive effect of pressing the button is diminished but still strong, but the negative effect of refraining is weakening faster. $\eps_0$ and $\eps_1$ are both less than one now.

Finally, imagine that a convention on parenting has arrived at the hotel, bringing along their children who have come for behavioral training. Some of these young people like to amuse themselves by pressing all the buttons on the elevator. This happens very frequently when we ride, slowing our ascent appreciably. Now the interface is broken (our act of button-pressing has no positive or negative control over the outcome of arriving within a minute), and the matrix could be $\begin{bmatrix}.25 & .25 \\ .25 & .25 \end{bmatrix},$ in which case we would expect that $(\eps_0,\eps_1) = (0,0).$

Informally, we are thinking that the interface comprises
\begin{align}
&\eps_0: \text{the contextual probability of }  A' \Rightarrow  B' \\
&\eps_1: \text{the contextual probability of } A \Rightarrow B.
\end{align}
A first approach might be to use $\hat{\eps}$ to define a complementary measure of negative causation $\check{\eps}$, with
\begin{align}
\check{\eps} &= \frac{p_{00}} {p_{00} + p_{01}} - \frac{p_{10}}{p_{10} + p_{11}} \\
&= \frac{p_{00}p_{11}-p_{01}p_{10}}{(p_{00}+p_{01})(p_{10}+p_{11})}  \\
&= \hat{\eps}.
\end{align}
This shows  that $\hat{\eps}$ is a symmetric measure of causality, not distinguishing between positive and negative cause. To differentiate these, we can begin by decomposing the original matrix into something like signal and noise. If an interface comprises a diagonal matrix of positive and negative cause, then subtracting this from the original matrix must leave the observations not explained by the interface, which we will call the \emph{confusion matrix} $C$. One approach might be to decompose example $P_1$ into
\[
\begin{bmatrix}
  .23 & .25 \\
.20  &  .32
 \end{bmatrix}
=
\begin{bmatrix}
  0 & .25 \\
.20  &  0
 \end{bmatrix}
+
\begin{bmatrix}
  .23 & 0 \\
0  &  .32
 \end{bmatrix},
\]
with the confusion matrix being the left summand and the interface the right. This entails a particular epistemological choice, wherein we assume that all cases can be explained by a causal variable $A$ and an unknown anti-causal variable. In terms of the drug test scenario, it means that we consider all the treatment cases that recovered to be the sole effect of the drug, and all the cases where the patient did not take the drug and did not recover to be the sole effect of withholding treatment. This stance means that we take the effectiveness of the administration of the drug ($do(A)$) to behave like an indicator for some third variable. For example, if the drug acts like a perfect interface for subjects with a particular genetic condition and has the reverse effect on anyone else. Then the effectiveness of the drug on a case-by-case basis gives the same information as a genetic test for this condition.

Anti-causal subsets always exist if the subjects are distinguishable one from another. We can trivially create such a subset \textit{post facto} by listing suitable observations. The existence of a generalizable index that points to an anti-causal subset \emph{before} the experiment is more interesting.

As an example, a light switch may produce an interaction with a ceiling light like
$\begin{bmatrix}
  .25 & .25 \\
.25  &  .25
 \end{bmatrix}$, from which we may conclude that the switch has no influence over the light at all. Perhaps it is the wrong switch for that light. However, if we learn of a second switch wired as a ``two-way'' arrangement (often found at the top and bottom of a stairway), the decomposition
\[
\begin{bmatrix}
  .25 & .25 \\
.25  &  .25
 \end{bmatrix}
=
\begin{bmatrix}
  0 & .25 \\
.25  &  0
 \end{bmatrix}
+
\begin{bmatrix}
  .25 & 0 \\
0  &  .25
 \end{bmatrix}
\]
makes perfect sense as a union of anti-causal and causal subsets. Two experimenters, unaware of the wiring or each other, could simultaneously gather switch/light data and conclude that these were independent of one another. The $P$ matrix by itself does not contain enough information to identify anti-causal indexes.

We turn now to a weaker assumption about confusion. Rather than assigning maximum power to the putative cause $A$, we will assume that some portion of the observations is distributed independently of $A$.

\section{Preliminary Confusion}

Because we already have a workable idea of an interface using the symmetric $\hat{\eps}$, we can subtract that effect from the original matrix and see what ``confusion'' looks like in this case. We use the following relationship to define the confusion matrix $C$:
\begin{equation}
\begin{bmatrix}
  p_{00} & p_{01} \\
  p_{10} & p_{11}
 \end{bmatrix}
=
(1-\hat{\eps})
\begin{bmatrix}
  c_{00} & c_{01} \\
  c_{10} & c_{11}
 \end{bmatrix}
+
\hat{\eps}
\begin{bmatrix}
  p_{0*} & 0 \\
0  &  p_{1*}
 \end{bmatrix}.
\end{equation}
 The idea is that the coefficient $\hat{\eps}$ represents the amount of causal control $A$ exerts over $B$, and subtracting the appropriate diagonal matrix leaves the left-over ``independent confusion''. This decomposition has some nice properties.

\begin{proposition} For  $\hat{\eps} > 0$, the confusion matrix $C$ is a frequency matrix.
\end{proposition}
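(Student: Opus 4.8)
The plan is to solve the defining relation for $C$ and then verify the two defining properties of a frequency matrix: that all four entries are nonnegative, and that they sum to one. Isolating $C$ gives
\[
C = \frac{1}{1-\hat{\eps}}\left( P - \hat{\eps}\begin{bmatrix} p_{0*} & 0 \\ 0 & p_{1*} \end{bmatrix} \right),
\]
which is well defined as long as $\hat{\eps} < 1$. I would first dispense with this caveat: from \eqref{eq:epsilon0} we have $\hat{\eps} = p_{11}/p_{1*} - p_{01}/p_{0*} \le 1$, with equality forcing $p_{01}=p_{10}=0$ (the pure-diagonal case, where the decomposition degenerates); hence in the genuinely interesting range $0 < \hat{\eps} < 1$ the scalar $1/(1-\hat{\eps})$ is positive and finite, so the sign of each entry $c_{ij}$ agrees with the sign of its numerator $p_{ij} - \hat{\eps}\,D_{ij}$, where $D$ denotes the diagonal matrix above.

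The unit-sum condition is the easy half. Summing all four entries and using linearity, the total of $C$ equals $\left( \sum_{ij} p_{ij} - \hat{\eps}\sum_{ij} D_{ij}\right)/(1-\hat{\eps})$. Since $P$ is a frequency matrix its entries sum to one, and the diagonal matrix sums to $p_{0*}+p_{1*}=1$ as well, so the numerator collapses to $1-\hat{\eps}$ and the total is exactly one.

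Nonnegativity is the crux, and I expect the only real work to lie here. The off-diagonal entries are immediate, since $c_{01}=p_{01}/(1-\hat{\eps})$ and $c_{10}=p_{10}/(1-\hat{\eps})$ are quotients of nonnegative quantities. The diagonal entries require $p_{00} \ge \hat{\eps}\,p_{0*}$ and $p_{11}\ge \hat{\eps}\,p_{1*}$, equivalently $\hat{\eps} \le p_{00}/p_{0*}$ and $\hat{\eps}\le p_{11}/p_{1*}$. The key device is to exploit the two equivalent expressions for the single index established earlier: equation \eqref{eq:epsilon0} writes $\hat{\eps} = p_{11}/p_{1*} - p_{01}/p_{0*}$, a conditional probability minus a nonnegative term, hence bounded above by $p_{11}/p_{1*}$; and the identity $\check{\eps}=\hat{\eps}$ derived above writes the same quantity as $\hat{\eps} = p_{00}/p_{0*} - p_{10}/p_{1*}$, bounded above by $p_{00}/p_{0*}$. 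These two bounds are exactly the required inequalities, so both diagonal numerators are nonnegative and $C$ is a frequency matrix. The main obstacle is thus not a lengthy computation but the recognition that the symmetric nature of $\hat{\eps}$ — the fact that it equals both the one-sided difference and its converse $\check{\eps}$ — is precisely what forces the two diagonal entries to stay nonnegative simultaneously.
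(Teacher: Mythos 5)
Your proof is correct and follows essentially the same route as the paper's: both isolate $C$, obtain the unit sum from the convex-combination structure, and establish nonnegativity of the diagonal entries by substituting the formula for $\hat{\eps}$ --- your bound via the identity $\check{\eps}=\hat{\eps}$ is algebraically the same computation as the paper's direct substitution, which yields $p_{00}-\hat{\eps}\,p_{0*} = p_{0*}p_{10}/p_{1*}\ge 0$ and $p_{11}-\hat{\eps}\,p_{1*} = p_{1*}p_{01}/p_{0*}\ge 0$. Your explicit handling of the degenerate case $\hat{\eps}=1$ (where the matrix is diagonal and the decomposition collapses) is a small point of extra care that the paper leaves implicit.
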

\begin{proof}[Proof]
We have to show that the elements of $C$ are in $[0,1]$ and that they sum to one. The second part is easy, since we are using $\hat{\eps}$ to create a convex combination that sums to the original matrix $P$. Since both $P$ and $\begin{bmatrix}  p_{0*} & 0 \\ 0  &  p_{1*} \end{bmatrix}$ sum to one, $C$ must also. For the rest, observe that
\begin{align}
C &=\frac{1}{1-\hat{\eps}}
\begin{bmatrix}
    p_{00}-\hat{\eps} p_{0*} & p_{01} \\
   p_{10}  &  p_{11}-\hat{\eps} p_{1*}
 \end{bmatrix}  \\  \label{eq:confusion}
&= \frac{1}{1-\hat{\eps}}
\begin{bmatrix}
  p_{0*} \left( 1-\frac{p_{11}}{p_{1*}} \right)  &  p_{01} \\
   p_{10}  &       \frac{p_{1*}p_{01}}{p_{0*}}
 \end{bmatrix},
\end{align}
where the second line is obtained by substituting the definition of $\hat{\eps}$ from $\eqref{eq:epsilon0}$. The matrix entries are all obviously non-negative. Since the whole matrix sums to one, no single element can be more than one.
\end{proof}

\begin{proposition} The confusion matrix $C$ has determinant zero.
\end{proposition}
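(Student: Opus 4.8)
The plan is to read off the determinant directly from the closed form of $C$ established in the previous proposition, equation \eqref{eq:confusion}. Because $C$ carries the overall scalar $\frac{1}{1-\hat{\eps}}$ and the determinant of a $2 \times 2$ matrix is homogeneous of degree two in its entries, that scalar factors out as $\frac{1}{(1-\hat{\eps})^2}$; since $\hat{\eps} > 0$ keeps this finite and nonzero, it cannot affect whether $\det C$ vanishes. So it suffices to show that the bracketed matrix
\[
M = \begin{bmatrix} p_{0*}\left(1-\frac{p_{11}}{p_{1*}}\right) & p_{01} \\ p_{10} & \frac{p_{1*}p_{01}}{p_{0*}} \end{bmatrix}
\]
is singular.

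The one identity doing all the work is $1 - \frac{p_{11}}{p_{1*}} = \frac{p_{1*} - p_{11}}{p_{1*}} = \frac{p_{10}}{p_{1*}}$, which rewrites the top-left entry of $M$ as $\frac{p_{0*}p_{10}}{p_{1*}}$. With this in hand I would compute $\det M$ as the diagonal product minus the anti-diagonal product: the diagonal product is $\frac{p_{0*}p_{10}}{p_{1*}} \cdot \frac{p_{1*}p_{01}}{p_{0*}} = p_{10}p_{01}$, which exactly cancels the anti-diagonal product $p_{01}p_{10}$. Hence $\det M = 0$, and therefore $\det C = 0$. There is no genuine obstacle here; everything reduces to the single algebraic substitution above, after which the cancellation is forced.

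A slightly more illuminating way to see the same thing is to note that, once the top-left entry is rewritten, the first row of $M$ is precisely $\frac{p_{0*}}{p_{1*}}$ times the second row, so $M$ (and hence $C$) is a rank-one matrix with vanishing determinant. I would flag in passing that the result is structural rather than accidental. For a $2 \times 2$ joint-frequency matrix the determinant equals the covariance of the two variables, and for binary variables vanishing covariance is equivalent to independence. Thus $\det C = 0$ says exactly that $A$ and $B$ are independent under $C$, which is precisely the interpretation we intend for the residual ``independent confusion'' that remains after the causal part has been subtracted off.
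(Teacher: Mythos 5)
Your proof is correct and takes essentially the same route as the paper: both reduce to the bracketed matrix in \eqref{eq:confusion} and verify via the identity $p_{1*}-p_{11}=p_{10}$ that the diagonal product equals the anti-diagonal product $p_{01}p_{10}$. Your closing remarks about rank one and independence also mirror the paper's own discussion immediately following the proposition, so nothing further is needed.
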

\begin{proof}[Proof]
 Using $\eqref{eq:confusion}$, we need to show that $ p_{01}p_{10} =   p_{0*}(1-\frac{p_{11}}{p_{1*}}) \frac{p_{1*}p_{01}}{p_{0*}}$. Expanding the right hand side gives us
\begin{align}
p_{0*} \left( 1-\frac{p_{11}}{p_{1*}} \right) \frac{p_{1*}p_{01}}{p_{0*}} &=(p_{1*}-p_{11})p_{01} \\
&=  p_{01}p_{10}  & \text{ as desired.}
\end{align}
\end{proof}
Another way to say that $C$ has determinant zero is to say that one row (or column) is a multiple of the other. Because it is a probability matrix, this also means that it can be decomposed into an outer product of two distributions
\[ C = \begin{bmatrix}
    \sigma_{0*}  \\
   \sigma_{1*}
 \end{bmatrix}  \\
\begin{bmatrix}
    \sigma_{*0}  &   \sigma_{*1}
 \end{bmatrix},
\]
where $\sigma$ is a row or column sum, following our notation. An interaction matrix that is the product of its marginal distributions is what we would expect from two independent random variables. Therefore, we can interpret $C$ as the frequency of interaction between $A$ and $B$ that is outside the control of the interface, which seems appropriate for a confusion matrix. We will use this idea to generalize to two interface constants, and decompose $P$ as
\begin{equation}
  \begin{bmatrix}
  p_{00} & p_{01} \\
  p_{10} & p_{11}
 \end{bmatrix}
=
\begin{bmatrix}
1-  \eps_0 & 0 \\
  0 &  1- \eps_1
 \end{bmatrix}
 \begin{bmatrix}
  c_{00} & c_{01} \\
  c_{10} & c_{11}
 \end{bmatrix}
+
\begin{bmatrix}
 \eps_0 & 0 \\
  0 &  \eps_1
 \end{bmatrix}
\begin{bmatrix}
  p_{0*} & 0 \\
0  &  p_{1*}
 \end{bmatrix}, \label{eq:interface}
\end{equation}
with $\eps_0$ as the interface constant that represents the effect of withholding treatment (or whatever $A=0$ can be interpreted as), and $\eps_1$ the positive effect of $A=1$ on $B=1$. We will require that the $C$ matrix be the product of its marginals, so that it can be thought of as a distribution of two independent random variables. The interface coefficients we interpret as the probability of a causal event (positive or negative). The multiplication of the $\eps$s by row sums $p_{0*}$ and $p_{1*}$ make these coefficients invariant under row-scaling. Because this is considered to be experimental data, we control the relative frequency of $A=1$ and $A=0$, and the measure of causality should be independent of that.

The scaling effect of $1-\eps$ and $\eps$ on the matrix decomposition of $P$ can be thought of as a uniform random variable $U$ that ``decides'' whether causality due to our intervention, rather than confusion from some masking effect, determines the outcome. When we apply the cause $A=0$, if $U > \eps_0$, then the causal implication makes $B=0$, and otherwise, we take our chances with the confusion matrix (which may still end up with $B=0$ from other causes). Together these conditional interactions account for all the observations found in $P$, and the theoretical interest is in systematically subtracting out the interface (causal) effects from what we might consider environmental noise.

So far, we only have one measure, $\hat{\eps}$ of causality. Next we show that there is a continuum of possible measures.

\section{More Varieties of Confusion}
The condition that $C$ be a product of its margins can be used to derive a relationship for the interface coefficients, since
\[
\begin{bmatrix}
1-  \eps_0 & 0 \\
  0 &  1- \eps_1
 \end{bmatrix}
 \begin{bmatrix}
  c_{00} & c_{01} \\
  c_{10} & c_{11}
 \end{bmatrix}
=
  \begin{bmatrix}
  p_{00} - \eps_0 p_{0*} & p_{01} \\
  p_{10} & p_{11} - \eps_1 p_{1*}
 \end{bmatrix},
\]
but the row-scaling of $C$ won't affect its singularity, so we want the determinant of the right side to be zero, or
\begin{equation}
(p_{00} - \eps_0 p_{0*})(p_{11} - \eps_1 p_{1*}) =  p_{01} p_{10} \label{eq:detzero}
\end{equation}
We can write $\eps_1$ as a function of $\eps_0$ with
\begin{equation}
\eps_1= \frac{1}{p_{1*}} \left( p_{11}-\frac{p_{01} p_{10}}{p_{00}-\eps_0 p_{0*} } \right), \label{eq:efunction}
\end{equation}
which is a hyperbola (unless $p_{01}p_{10}=0$, in which case it is vertical and horizontal line segments) with its lower half having $x$-intercept and $y$-intercept given by
\begin{align}
x_0 &= \frac{p_{00}p_{11}-p_{01}p_{10}}{p_{0*}p_{11}} \\
&= \frac{Cov(A,B)}{p_{0*}p_{11}}  \\
y_0 &= \frac{p_{00}p_{11}-p_{01}p_{10}}{p_{1*}p_{00}} \\
 &= \frac{Cov(A,B)}{p_{1*}p_{00}}.
\end{align}

 The upper half of the hyperbola passes through $(1,1)$, although that is only a valid solution when the original matrix is diagonal.  This is, however, convenient, since it rules out any other solutions on the upper curve of the hyperbola. Note that the bounds shown above do not limit the interface coefficients from attaining all the values on the lower half of the hyperbola in the first quadrant, framed by the intercepts. This gives us a neat picture of all possible independent-confusion epistemologies. Two examples are shown in Figure \ref{fig:An interface}.

\begin{figure}[H]
  \centering
  \includegraphics[width=0.6\textwidth]{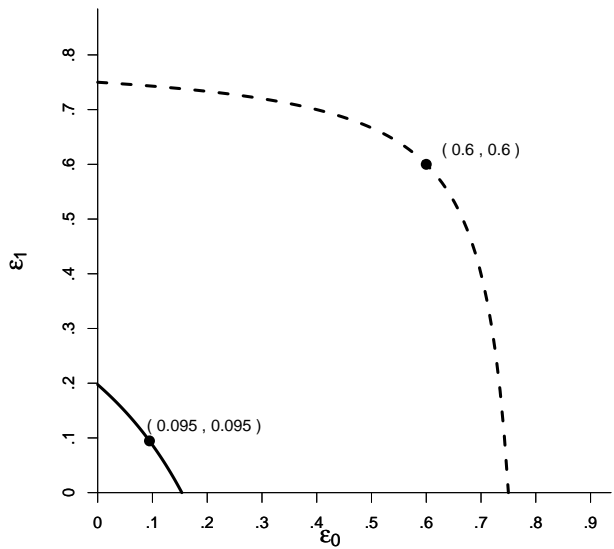}\\
  \caption[Causal coefficients $\eps_0$ versus $\eps_1$]{Interface plots for  $\begin{bmatrix}
.23  & .25 \\
  .20 & .32
 \end{bmatrix}$ (solid) and $\begin{bmatrix}
.40 & .10 \\
.10 & .40
 \end{bmatrix}$ (dashed), with $\hat{\eps}$ identified.}\label{fig:An interface}
\end{figure}

The relationship in \eqref{eq:efunction} allows us to think of a causal relationship between binary $A$ and $B$ as a continuum that we choose from based on some philosophical principle.

\begin{definition}An interface $\emph{epistemology}$ is method of uniquely assigning interface coefficients $(\eps_0, \eps_1)$, so that $\eqref{eq:interface}$ holds when $p_{00}p_{11}-p_{01}p_{10} \geq 0$ .
\end{definition}

Next we do some calculations that will simplify working with independent-confusion interfaces.

\section{Normalizing $P$}
The decomposition $\eqref{eq:interface}$ leaves one degree of freedom in identifying a particular $(\eps_0,\eps_1)$ for a given $P$. In this section we show that this amounts to the choice of the column distribution of $C$, and derive some consequences of that fact.

It is natural to assume that $C$ is of the form
\begin{equation}
C = \begin{bmatrix}
\sigma_0 p_{0*} & \sigma_1 p_{0*} \\
\sigma_0 p_{1*} & \sigma_1 p_{1*}
\end{bmatrix},
\end{equation}
where $(\sigma_0,\sigma_1)$ forms a distribution. Notice that in this instance we have
\begin{align}
 \begin{bmatrix}
  p_{00}/p_{0*} & p_{01}/p_{0*} \\
  p_{10}/p_{1*} & p_{11}/p_{1*}
 \end{bmatrix}
=&
\begin{bmatrix}
(1- \eps_0)\sigma_0 +\eps_0 & (1- \eps_0)\sigma_1 \\
(1- \eps_1)\sigma_0 & (1- \eps_1)\sigma_1 +\eps_1
 \end{bmatrix} \\
=& \begin{bmatrix}
\eps_0 \sigma_1+\sigma_0 & \sigma_1 - \eps_0 \sigma_1 \\
\sigma_0 - \eps_1 \sigma_0 & \eps_1 \sigma_0+\sigma_1
 \end{bmatrix},
\end{align}
from which we can use the first row to get the system
\begin{align}
p_{00}/p_{0*} =& \eps_0 \sigma_1 + \sigma_0 \\
p_{01}/p_{0*} =& \sigma_1 - \eps_0 \sigma_1.
\end{align}
Adding these two equations gives
\begin{align}
\frac{p_{00}+p_{01}}{p_{0*}} =&  \sigma_1 + \sigma_0 \\
1 &= 1.
\end{align}
A similar calculation works for the second row, showing that any distribution $(\sigma_0,\sigma_1)$ leads to a self-consistent definition of the interface coefficients, with
\begin{align}
\eps_0 =& 1- \frac{p_{01}}{p_{0*}\sigma_1} \\
\eps_1 =& 1- \frac{p_{10}}{p_{1*}\sigma_0}.
\end{align}

This result can be used in the other direction as well, to find the useful lower limits of $(\sigma_0,\sigma_1)$, at the intercepts in the $\epsilon_0$-$\epsilon_1$ first quadrant.
\begin{align}
0 =& 1- \frac{p_{01}}{p_{0*}\sigma_1} \text{implies that} \\
\sigma_0 =& \frac{p_{01}}{p_{0*}}, \text{and} \\
0 =& 1- \frac{p_{10}}{p_{1*}\sigma_0} \text{implies that} \\
\sigma_1 =& \frac{p_{10}}{p_{1*}}.
\end{align}

This result demonstrates that the interface decomposition in $\eqref{eq:interface}$ only depends on the row-normalized version of $P$. We can rewrite the relationship between interface coefficients in  $\eqref{eq:epsilon0}$ as
\begin{equation}
\eps_1 = \frac{p_{11}}{p_{1*}} - \frac{p_{10}}{p_{1*}} \left( \frac{p_{01}/p_{0*}}{p_{00}/p_{0*}-\eps_0} \right) \label{eq:newepsilon0}.
\end{equation}

Generally, these are called right-stochastic matrices, and are often associated with Markov Chains. We can now write a simpler version of the interface decomposition, using a new row-normalized matrix of observations, as
\begin{equation}
\begin{bmatrix}
r_{00} & r_{01} \\
r_{10} & r_{11}
\end{bmatrix} := \begin{bmatrix}
p_{00}/p_{0*} & p_{01}/p_{0*} \\
p_{10}/p_{1*} & p_{11}/p_{1*}
\end{bmatrix},
\end{equation}
so that
\begin{equation}
  \begin{bmatrix}
  r_{00} & r_{01} \\
  r_{10} & r_{11}
 \end{bmatrix}
=
\begin{bmatrix}
1-  \eps_0 & 0 \\
  0 &  1- \eps_1
 \end{bmatrix}
 \begin{bmatrix}
  \sigma_0 & \sigma_1 \\
  \sigma_0 & \sigma_1
 \end{bmatrix}
+
\begin{bmatrix}
 \eps_0 & 0 \\
  0 &  \eps_1
 \end{bmatrix}.  \label{eq:newinterface}
\end{equation}
This can also be written in outer-product form as
\begin{equation}
  \begin{bmatrix}
  r_{00} & r_{01} \\
  r_{10} & r_{11}
 \end{bmatrix}
=
\begin{bmatrix}
1-  \eps_0 \\
1- \eps_1
 \end{bmatrix}
 \begin{bmatrix}
  \sigma_0 & \sigma_1
 \end{bmatrix}
+
\begin{bmatrix}
 \eps_0 & 0 \\
  0 &  \eps_1
 \end{bmatrix}.  \label{eq:newinterfaceOuter}
\end{equation}

The interface relationship becomes
\begin{equation}
\eps_1 = r_{11} - \frac{r_{10}r_{01}}{r_{00}-\eps_0}, \label{eq:efunctionr}
\end{equation}
with
\begin{align}
\eps_0 \epsilon& \left[0, \frac{r_{00}r_{11}-r_{01}r_{10}}{r_{11}} \right] \label{eq:xintercept} \\
\eps_1 \epsilon& \left[0, \frac{r_{00}r_{11}-r_{01}r_{10}}{r_{00}} \right] \label{eq:yintercept}
\end{align}
(note that the numerators can be written in other forms, such as the surprisingly simple $r_{00}-r_{10}$),
derivable from
\begin{align}
\eps_0 =& 1 - \frac{r_{01}}{\sigma_1} \label{eq:epsilonandsigma0} \\
\eps_1 =& 1 - \frac{r_{10}}{\sigma_0} \label{eq:epsilonandsigma1},
\end{align}
where the distribution limits (always subject to $\sigma_0 + \sigma_1 = 1$) are
\begin{align}
\sigma_0 \epsilon& [r_{10},r_{00}] \\
\sigma_1 \epsilon& [r_{01},r_{11}].
\end{align}
It will also be handy to have the inverse relationships for \eqref{eq:epsilonandsigma0} and \eqref{eq:epsilonandsigma1} at hand:
\begin{align}
\sigma_0 =& \frac{r_{10}}{1-\eps_1} \\
\sigma_1 =& \frac{r_{01}}{1-\eps_0}.
\end{align}

 The decomposition \eqref{eq:newinterface} comprises a bijection from $R$ into a set of hyperbolas with the intercepts given in \eqref{eq:xintercept} and \eqref{eq:yintercept}. The curve is therefore uniquely identified with the given $R$, but we are after a \emph{property} of $R$ (causality with $B$). This depends on what we believe about the distribution of $B$ that is not caused by $A$. Given a choice for the confusion distribution $\sigma_1$, we can identify particular values for the interface coefficients.

\section{A Selection of Epistemologies}
In this section we will look at several choices for $(\sigma_0,\sigma_1)$ and the resulting interfaces. We can proceed by first choosing ($\eps_0,\eps_1)$ and then calculating the confusion matrix, or in the other direction, by assuming something about the distribution of confusion, and then calculating the interface coefficients.

\subsection{The Symmetric Interface} \label{ss:A}
We begin with an analysis of the one-dimensional effect parameter found in $\eqref{eq:epsilon}$, to identify the implicit assumption about confusion. By setting $\eps_0 = \eps_1 = \hat{\eps}= r_{11}-r_{01}$ in \eqref{eq:epsilonandsigma0} and \eqref{eq:epsilonandsigma1}, we find that
\begin{equation}
(\sigma_0,\sigma_1) = \left( \frac{r_{01}}{r_{01}+r_{10}}, \frac{r_{10}}{r_{01}+r_{10}} \right).
\end{equation}
Adopting this model implies the belief that future interface errors will be distributed like the anti-causal observations counted in $p_{01}$ and $p_{10}$.

There are other interesting relationships to this coefficient, including $\hat{\eps} = \det R$, the matrix determinant, which in this case is also equal to $r_{00} + r_{11} - 1 = \trace{(R)}-1$. Moreover, the two eigenvalues of $R$ are 1 and $\hat{\eps}$.

Taking the specific example we began with, we can understand the example \[ P_1 = \begin{bmatrix}
  .23 & .25 \\
.20  &  .32
 \end{bmatrix} \] as a row-normalized decomposition
 \[\begin{bmatrix}
  .535 & .465 \\
.439 &  .561
 \end{bmatrix}  =
 \begin{bmatrix}
  .904 & 0 \\
0  &  .904
 \end{bmatrix}\begin{bmatrix}
  .485 & .515 \\
.485  &  .515
 \end{bmatrix} +
 \begin{bmatrix}
  .096 & 0 \\
0  &  .096
 \end{bmatrix},
 \]
  with $\hat{\eps} \approx .096.$

Next we will try out a new idea for deriving the corresponding interface coefficients.

\subsection{Maximum Cause} \label{ss:M}
The total explanatory power of the interface is $\eps_0+\eps_1$, so it is of interest to see what the confusion distribution looks like if we maximize this sum. To find it, we can use \eqref{eq:newepsilon0} to find where the slope of the $\eps_1 = f(\eps_0)$ curve is $-1$. Differentiating gives
\[
\frac{d}{d\eps_0} \left( r_{11} - \frac{r_{01} r_{10}}{r_{00}-\eps_0} \right) =
- \left( \frac{r_{01} r_{10}}{r_{00}-\eps_0} \right) ^2,
\]
so to maximize the sum we want
\begin{align}
\eps_0 =& r_{00} - \sqrt{r_{01}r_{10}} \\
\eps_1 =& r_{11} - \sqrt{r_{01}r_{10}}.
\end{align}
Substituting this into \eqref{eq:epsilonandsigma0} and \eqref{eq:epsilonandsigma1} and solving gives
\begin{align}
\sigma_0 =& \frac{\sqrt{r_{10}}}{\sqrt{r_{01}}+\sqrt{r_{10}}} \\
\sigma_1 =& \frac{\sqrt{r_{01}}}{\sqrt{r_{01}}+\sqrt{r_{10}}},
\end{align}
which looks like a variation of the symmetric $\hat{\eps}$ interface we just saw. Both of them rely on the maximum size of potential anti-causes as the basis to calculate the confusion matrix, which is what limits the size of the interface coefficients.

\subsection{Classification Confusion} \label{ss:C}

It may be that the distribution of $B$ is fixed, and for that reason we insist that
$(\sigma_0, \sigma_1) = (p_{*0},p_{*1}).$ This would be the case for a classification problem, for example, where $A$ is not \emph{causing} $B$, but only used as an indicator function to identify it. We might use this as an exploratory tool to look for causal relationships in survey data.  In this case, knowing the row-normed matrix is not enough: we have to have the original distribution of $B$ as well.

The interface coefficients are
\begin{align}
\eps_0 =& 1- \frac{r_{01}}{p_{*1}} \\
\eps_1 =& 1- \frac{r_{10}}{p_{*0}}
\end{align}.

These coefficients can be understood as the relative difference between the original matrix and the product of its margins. This is easier to see in the form
\[ \eps_1 = \frac{p_{1*}p_{*0}-p_{10}}{p_{1*}p_{*0}},
\]
which can also be written
\begin{align}
&= \frac{(p_{10}+p_{11})(p_{00}+p_{10})-p_{10}}{p_{1*}p_{*0}} \\
&= \frac{p_{10}p_{00}+p_{10}^2+ p_{11}p_{00}+p_{11}p_{10}-p_{10}}{p_{1*}p_{*0}} \\
&= \frac{p_{10}(p_{00}+p_{10}+p_{11}-1)+p_{11}p_{00}}{p_{1*}p_{*0}} \\
&= \frac{p_{11}p_{00}-p_{10}p_{01}}{p_{1*}p_{*0}} \\
&= \frac{Cov(A,B)}{p_{1*}p_{*0}}.
\end{align}
Similarly,
\[ \eps_0 = \frac{Cov(A,B)}{p_{0*}p_{*1}}.
\]
A nice feature of this epistemology is that the geometric average of interface coefficients is the correlation coefficient, because
\begin{align}
\eps_0 \eps_1 &= \frac{Cov(A,B)Cov(A,B)}{(p_{0*}p_{1*})(p_{*0}p_{*1})} \\
&= \frac{Cov(A,B)^2}{Var(A)Var(B)}.
\end{align}
 There is also a relationship to the Symmetric Interface $\hat{\eps}$ in that if we compute it for both $R$ and its transpose, the geometric average of these is also the correlation coefficient.

For our example $ \begin{bmatrix}
  .23 & .25 \\
.20  &  .32
 \end{bmatrix}$, we decompose the resulting $R$ as
 \[\begin{bmatrix}
  .535 & .465 \\
.439 &  .561
 \end{bmatrix}  =
 \begin{bmatrix}
.894  & 0 \\
0  & .914
 \end{bmatrix}\begin{bmatrix}
  .48 & .52 \\
.48  &  .52
 \end{bmatrix} +
 \begin{bmatrix}
  .106 & 0 \\
0  &  .086
 \end{bmatrix}.
 \]

\subsection{Untreated Confusion} \label{ss:U}
Some may object to the Classification Confusion, because they want to single out the ``treatment'' case $A=1$ as a new effect, and only use untreated observations coming from the $A=0$ observations as a basis for confusion.
This is the maximum value $\eps_1$ can be, and it gives us a philosophical justification for the $y$-intercept on the interface curve. In this case we have
\begin{equation}
  \begin{bmatrix}
  r_{00} & r_{01} \\
  r_{10} & r_{11}
 \end{bmatrix}
=
\begin{bmatrix}
1-  \eps_0 & 0 \\
  0 &  1- \eps_1
 \end{bmatrix}
 \begin{bmatrix}
  r_{00} & r_{01} \\
  r_{00} & r_{01}
 \end{bmatrix}
+
\begin{bmatrix}
 \eps_0 & 0 \\
  0 &  \eps_1
 \end{bmatrix},
\end{equation}
which gives us
\begin{align}
\eps_0 =& 1- \frac{r_{01}}{r_{01}} \\
=& 0 \\
\eps_1 =& 1 - \frac{r_{10}}{r_{00}},
\end{align}
which is the maximum value possible.

\subsection{Natural Confusion} \label{ss:N}
Recalling that the determinant-zero condition on the confusion matrix in \eqref{eq:detzero} is
\[ (p_{00} - \eps_0 p_{0*})(p_{11} - \eps_1 p_{1*}) =  p_{01} p_{10}, \]
it is natural to assign
\begin{align}
p_{00} - \eps_0 p_{0*} =& p_{01} \\
p_{11} - \eps_1 p_{1*} =& p_{10},
\end{align}
so that
\begin{align}
\eps_0 = & \frac{p_{00} - p_{01}}{p_{0*}} \\
=& r_{00} - r_{01} \\
\eps_1 =& r_{11} - r_{10}
\end{align}
with resulting confusion
\begin{align}
\sigma_0 = & \frac{r_{10}}{1 - r_{11} + r_{10}} \\
         =& \frac{r_{10}}{r_{10} + r_{10}} \\
         =& 1/2 \\
\sigma_1 =& 1/2.
\end{align}

If we chose the other natural factorization (subtracting columns instead of rows), we get
\begin{align}
p_{00} - \eps_0 p_{0*} =& p_{10} \\
p_{11} - \eps_1 p_{1*} =& p_{01},
\end{align}
so that
\begin{align}
\eps_0 =& r_{00} - r_{10} \\
\eps_1 =& r_{11} - r_{01},
\end{align}
which we recognize as our starting point: the Symmetric Interface.

\subsection{Graphical Comparison}
We now have a catalog of a few possible epistemologies, and it is interesting to see how they compare for different types of $P$ matrices. For each of the graphs below, the labels on points are indexed by:
\begin{description}
  \item[S] Symmetric Interface from Section \ref{ss:A}
  \item[M] Maximum Cause from Section \ref{ss:M}
  \item[C] Classification Confusion from Section \ref{ss:C}
  \item[U] Untreated Confusion from Section \ref{ss:U}
  \item[N] Natural Confusion from section \ref{ss:N}
\end{description}

\begin{figure}[H]
  \centering
  \includegraphics[width=0.6\textwidth]{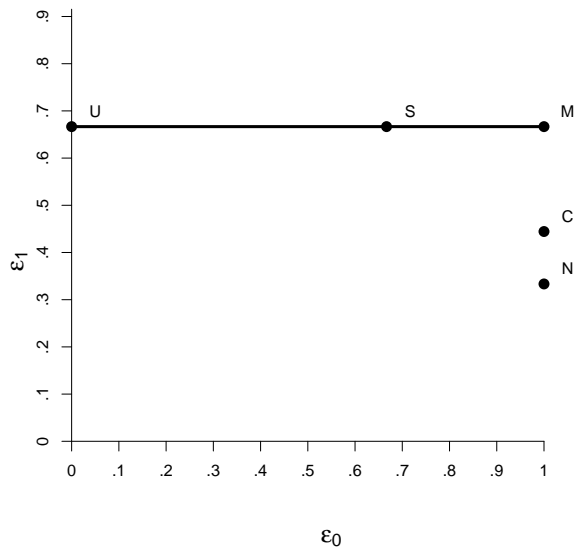}\\
  \caption[Another graph]{Interfaces for $\begin{bmatrix}.4 & 0 \\ .2 & .4 \end{bmatrix}$} \label{Interface example 1}
\end{figure}

\begin{figure}[H]
  \centering
  \includegraphics[width=0.6\textwidth]{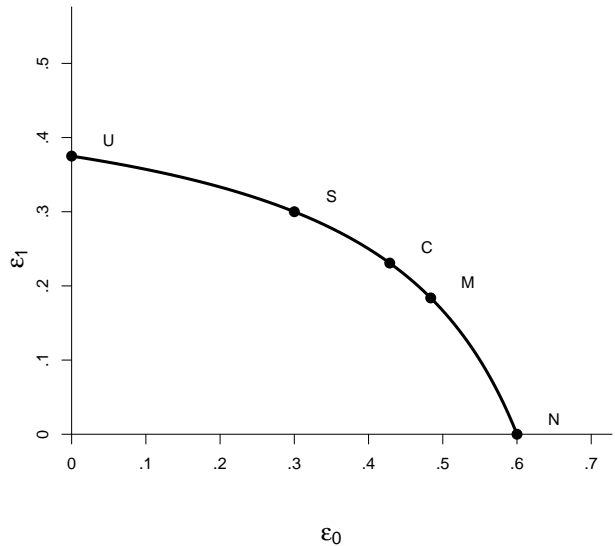}\\
  \caption[Another graph]{Interfaces for $\begin{bmatrix}.40 & .10 \\ .25 & .25 \end{bmatrix}$} \label{Interface example 2}
\end{figure}

\section{Final Remarks}
The preceding discussion shows that considering the respective probabilities of $A \Leftrightarrow B$ has a richness than cannot be captured with a single index of $A \Rightarrow B$ causality. For example, we saw in Section \ref{ss:C} that the correlation coefficient on two binary variables can be seen as an average causal index, and that we can decompose it, given certain assumptions, into a forward and reverse implication with associated probabilities. We also showed how use of a usual measure of effect, $\hat{\eps}$, makes implicit assumptions about the distribution of non-causal results, and is agnostic about the direction of logical implication.

In practice, the complex systems we rely on usually have logic that resembles an interface. We depend on causalities of the form [driver brakes sharply] $\Leftrightarrow$ [vehicle stops suddenly], and this becomes a simplified operating assumption. Each direction of the implication is of interest: the forward direction is a problem in designing good equipment, and the reverse direction amounts to preventing other causes of sudden stops, such as running into a light pole. Both of these activities are important for intelligent agents.

\bibliographystyle{plain}
\bibliography{CausalInterfaces}

\begin{thebibliography}{1}

\bibitem{agresti2007introduction}
Alan Agresti.
\newblock {\em An introduction to categorical data analysis}, volume 423.
\newblock Cambridge University Press, 2007.

\bibitem{birnbacher2012negative}
Dieter Birnbacher and David Hommen.
\newblock {\em Negative Kausalit{\"a}t}.
\newblock Walter de Gruyter, 2012.

\bibitem{fitelson2011probabilistic}
Branden Fitelson and Christopher Hitchcock.
\newblock Probabilistic measures of causal strength.
\newblock {\em Causality in the sciences. Oxford University Press, Oxford},
  pages 600--627, 2011.

\bibitem{sep-causation-probabilistic}
Christopher Hitchcock.
\newblock Probabilistic causation.
\newblock In Edward~N. Zalta, editor, {\em The Stanford Encyclopedia of
  Philosophy}. Winter 2012 edition, 2012.

\bibitem{kolo2011binary}
Brian Kolo.
\newblock {\em Binary and Multiclass Classification}.
\newblock Lulu. com, 2011.

\bibitem{pearl2000causality}
Judea Pearl.
\newblock {\em Causality: models, reasoning and inference}, volume~29.
\newblock Cambridge Univ Press, 2000.

\bibitem{pearl2009causal}
Judea Pearl.
\newblock Causal inference in statistics: An overview.
\newblock {\em Statistics Surveys}, 3:96--146, 2009.

\bibitem{pearl2009causality}
Judea Pearl.
\newblock {\em Causality: Models, Reasoning, and Inference}.
\newblock Walter de Gruyter, 2nd edition, 2009.

\bibitem{turing1936computable}
Alan~Mathison Turing.
\newblock On computable numbers, with an application to the
  entscheidungsproblem.
\newblock {\em J. of Math}, 58:345--363, 1936.

\end{thebibliography}

\end{document}